%

\documentclass[a4paper, 11pt]{article}
\usepackage{amsmath}
\usepackage{amssymb}
\usepackage[pdftex]{graphicx}
\usepackage{epstopdf}
\usepackage{longtable}
\usepackage{subfigure}
\usepackage{natbib}
\usepackage[pdftex, pdftitle={Conscious Machines and Consciousness Oriented Programming},
pdfauthor={Norbert Batfai},
pdfkeywords={programming paradigm, machine consciousness, conscious computer programs, intuitive computer programs, quasi-intuitive Turing machines, ConsciousJ programming language},
pdfstartview=FitB]{hyperref}
\usepackage{amssymb,amsmath,amsthm,amsfonts}
\usepackage{array}
\usepackage{mdwmath}
\usepackage{mdwtab}
\usepackage{color}
\usepackage{listings}

\bibliographystyle{abbrvnat}

\newtheorem{theorem}{Theorem}
\newtheorem{definition}{Definition}
\newtheorem{notation}{Notation}
\newtheorem{remark}{Remark}
\newtheorem{example}{Example}

\newcommand{\keywords}[1]{\par\addvspace\baselineskip\noindent\textbf{Keywords: }\textit{#1}.}

\author{Norbert B\'atfai
\\University of Debrecen
\\Department of Information Technology
\\\texttt{batfai.norbert@inf.unideb.hu}}
\title{Conscious Machines and Consciousness Oriented Programming}

\begin{document}
\maketitle

\begin{abstract}
In this paper, we investigate the following question: how could you write such computer programs that can work like conscious beings? The motivation behind this question is that we want to create such applications that can see the future. The aim of this paper is to provide an overall conceptual framework for this new approach to machine  consciousness. So we introduce a new programming paradigm called Consciousness Oriented Programming (COP).
\keywords{programming paradigm, machine consciousness, conscious computer programs, intuitive computer programs, quasi-intuitive Tu\-ring machines, ConsciousJ programming language}
\end{abstract}

\tableofcontents
\listoffigures
\listoftables
\lstlistoflistings

\section{Introduction}

Eugen Wigner wrote in his essay \citep{wigner} that \textit{"observation of infants where we may be able to sense the progress of the awakening of consciousness"} is a possible method to solve the mind-body problem. I have three children and I have been observing them when I can. They are now 3 and 5 years old. The older child has already been perfectly able to arrange the everyday events in time, the younger two haven't been able to speak about it with any degree of accuracy yet. With hindsight, moreover, at the age of 2, they couldn't handle the term timeliness.

In the course of human cognition, there has been a need to know the future from time immemorial. The success of this effort has been culminating at Newton's mechanical world-view in the late 19th century. But since then the quantum mechanics has turned this deterministic world-view upside down, opening the way to use new quantum phenomena of a deeper level of reality.  But even though the Orch OR \citep{orchor} model of quantum consciousness is an exciting and promising theory, we have to restrict ourselves to investigate computer programs and machine consciousness because computers of nowadays have no quantum computing parts.

We believe that one of the drivers of evolutionary evolving of natural intelligence was the process of replacing, by natural selection, the automatic response of living matter with foresight.

In this work, in compliance with this outlined motivation, we emphasize the pursuit of predicting the future as the cornerstone of the definition of machine consciousness.

\subsection{Previous and Similar Works}

Several recent studies have included definitions of machine consciousness. For example, \citep{starzyk} outlined an architectural model inspired by the functional organization of the human brain. Their definition \citep[pp. 9]{starzyk} says that a machine is conscious if the functional components concerned are present at the machine in question. This and similar (for example, CogAff \citep{virtmachines}, Lida of GW \citep{lida}) models typically involve a detailed description of a sophisticated architectural system and focus on the question of "How". 

By contrast, in our opinion, the conditions of the definition should be in a format that the fulfillment of these can be easily checked.  Accordingly, in this paper we are not interested in the question of "How". We place only one aspect at the heart of the definition of machine consciousness, namely conscious machines must be able to see the future. This aspect is not totally unknown because it is used in creating the goal- and utility-based agents \citep[pp. 42-45]{russellnorvig}, but we will go further than that.

Our approach for machine (self-) consciousness supposedly will be very compute intensive, so it may be interesting that in \citep{idle} we outlined an idea about where the necessary computations should be performed in the case of the operating systems.

\section{Machine  Consciousness}\label{sect_mc}

First, we give the general frames of definitions in which the term "computer program" is interpreted broadly, that is the Turing-like machines, the various web applications, the command-line interfaces, the GUIs, the kernel of operating systems and goal- or utility-based agents are regarded as computer programs.

\begin{definition}[Knowing the Future Input]\label{ktfi}
A computer program knows its future input if it can predict that better than a random guess.
\end{definition}

\begin{definition}[Knowing the Future State]\label{ktfs}
A computer program knows its future state if it can predict that better than a random guess.
\end{definition}

\begin{definition}[Conscious Computer Programs]\label{ccp}
The behavior of a computer program is considered conscious if it knows its future input.
\end{definition}

\begin{definition}[Self-Conscious Computer Programs]\label{sccp}
The behavior of a computer program is considered self-conscious if it knows its future state.
\end{definition}

\begin{definition}[Intuitive Computer Programs]\label{icp}
The behavior of a conscious computer program is considered intuitive if its operation is based on its own predicted input rather than the real input.
\end{definition}

It is obvious that the consciousness is not an a priori property by our discussion. In addition, several levels of consciousness should be examined in given time intervals. Typically, the examination has two aspects, first we must study the source code. Second, we must observe the operation of the program. These remarks also indicate that our definitions are framed at a very, very high abstract level, in the concrete cases we probably will need to apply some inner simulation like the one introduced in \citep{zombi}. In conclusion, as regards the fulfillment of the definitions set out above, developers will obviously need to use sophisticated functional structures in the particular cases.

\subsection{Some Intuitive Examples for Definitions}

The intuitive usage of the definitions will be illustrated in this section. First, let's have a look at the following trivial examples in relation to the question of what data may be the input of a computer program. The input of a Turing machine is a word placed on its tape. The input of a CLI may be a set of commands entered by the user. The input of a GUI may be the set of user's activities. The input of a RoboCup \citep{robocup} agent is a set of information received from its aural (what it can hear), visual (what it can see) and body (what is its physical status) sensors. And finally, the activities of processes may be regarded as the input of the scheduler of an operating system. 

But in the case of a Turing machine, interpreting of the term "knowing the future input" is worthy of further discussions, because the interpretation of its operation is not wholly straightforward. As an initial approximation, the concatenation of the former input words and the Turing machine in question should be given as an input to a "conscious" and modified universal Turing machine. Another approach is to apply a prefix Turing machine, where the current future input should be to the right of the input head on the unidirectional input tape.

\begin{example}[Walking Across the Zebra Crossing]\label{pazc}
This is a trivial example of daily life. Every day the author goes across the zebra crossing shown in Figure \ref{zebra}. Here I am standing (at a safe) 3-4 meters away from the kerb. Then I am going to start to go when the traffic light for cars has changed to yellow, because I know from former personal experience that the traffic light for pedestrians changes to green soon afterwards. 

This "conscious" behavior represents an advantage for the author over the other pedestrians, because while he are already on the move, others will be waiting for the green signal of the traffic light for pedestrians.
\end{example}

\begin{figure}[h!]
\begin{center}
\includegraphics[scale=0.45]{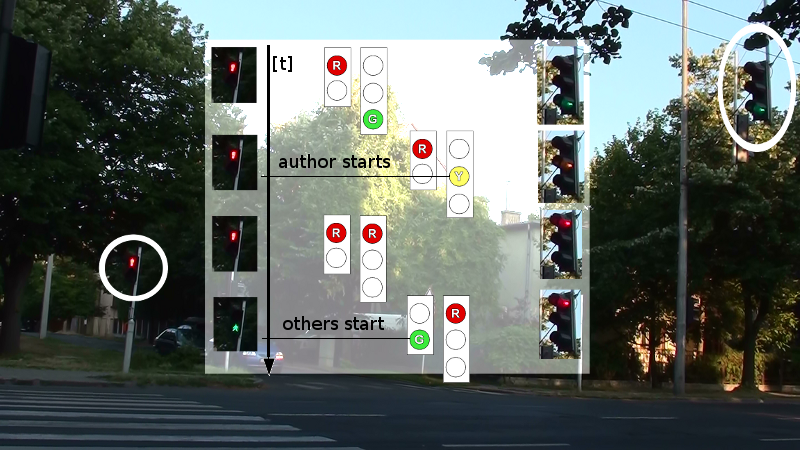}
\caption{The author walks across the zebra crossing.\label{zebra}}
\end{center}
\end{figure}

\begin{example}[Conscious Stock Market Charts]\label{smc}
In a given time period, a stock market chart may be regarded as a conscious program if it can also successfully predict the time series of stock data in question. However, we can mention that another different type of input may be interpreted as further input data in an interactive case. For example, an interactive chart program may also examine and successfully predict the habits of its user. In general, we may remark that investigating the user's habits and patterns is an existing research field, for example see \citep{pattern1}, \citep{pattern2} and \citep{pattern3}. 
\end{example}

\begin{example}[Conscious Databases]\label{cd}
In a naive approach, for example, the next couple of rows might be predicted by the table of a relational database that will contain the rows in question. But perhaps it makes more sense to think in terms of conscious behavior of OLAP systems, because the prediction is crucial to achieve good results in this field.
\end{example}

\begin{example}[Conscious Text Editors]\label{cte}
A successfully predicted input of a conscious text editor may be the next couple of words that will be entered by the user. In the world of text editing on mobile devices, it was a natural phase of development, for example, typing in predictive (iTap) mode is widely used in mobile phones to sending SMS or searching on the Web. We may notice that, nowadays, the investigation of several word prediction systems is an active research area, particularly with regard to mobile telephony  \citep{google},  \citep{sms}.
\end{example}

\begin{example}[Self-Conscious Chatter Bots]\label{sccb}
In this example, we are in a fortunate position that allows us to ask directly the questions about (knowing) the future. In compliance with this, we asked chatterbot Alice \citep{alice}, \citep{alice2}, who won the Loebner Prize \citep{loebner} 3 times in 2000, 2001 and 2004, the next questions about her future.

\begin{verbatim}
Human: Hello, Alice!
ALICE: Hi there!
Human: I am writing a paper about machine consciousness, 
            do you mind if I ask you a few questions?
ALICE: Will you include me in your essay?
Human: Yes, I will.
ALICE: I understand. Go ahead.
Human: What are you going to do tomorrow?
ALICE: You can find out if you become a botmaster.
Human: Will you be conversing with me tomorrow?
ALICE: I'll burn that bridge when I come to it. OK I will 
           try to be conversing with you tomorrow.
\end{verbatim}

If a conscious Alice had responsed the following answer based on some analytics, our satisfaction would increase.

\begin{verbatim}
Human: What are you going to do tomorrow?
CX ALICE: I will be expected to talk to someone.
\end{verbatim}

\end{example}

\begin{example}[Self-Conscious RoboCup Agents]\label{scrca}
A player agent may be regarded as a self-conscious program if it can run inner and simplified future simulation cycles of Soccer Server \citep{rcss} and based on it can also successfully predict the future state of itself.
\end{example}

\begin{example}[Intuitive RoboCup Agents]\label{irca}
In the situation shown in Figure \ref{messia}, a self-conscious RoboCup agent may be regarded as an intuitive program if it can pass the ball to an open space so that a teammate able to reach it as an unexpected through pass. These are the magical moments of real football. Figure \ref{messia} schematically shows such a situation that happened in the match between FC Barcelona and Levante UD at the Camp Nou stadium on February 24, 2008 \citep{messiassists}, where the goal of Samuel Eto'o was assisted by Lionel Messi in the 55th min.
\end{example}

\begin{figure}[h!]
\begin{center}
\includegraphics[scale=0.7]{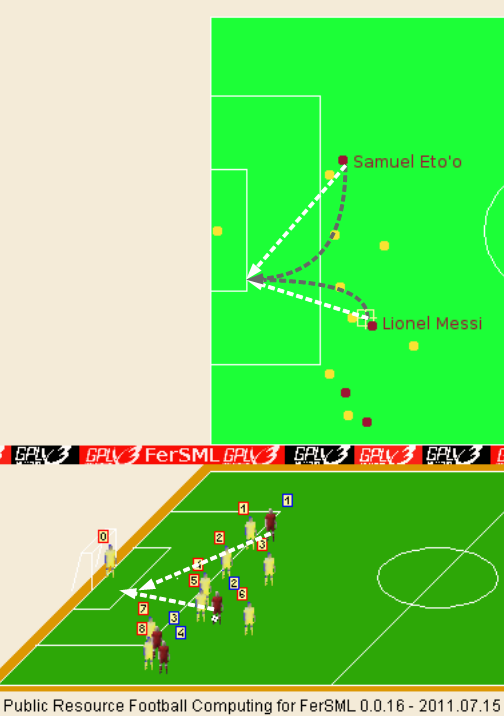}
\caption{This drawing is an illustration based on the match between FC Barcelona and Levante UD in the Primera Divisi\'on on February 24, 2008. It is created with the FerSML football simulation platform \cite{fersml1}, \cite{fersml2}.\label{messia}}
\end{center}
\end{figure}

\subsection{The Trick of Consciousness}

A computer program can be trivially made conscious, if we shift its virtual present to the true past. In other words this means that its all input  has been delayed in time and in the meantime, we open a loophole to access input data of the present. It is a use case of the well-known conception of  \textit{"living in the past"}. This latter is described, for example, in \citep{past}, \citep{past2}. 

The following AspectJ Java code illustrates exactly this conception of time shifting. It is a simple game in which the two players P and Q try to catch the ball that moves with random walk on a field of fixed 80x24 size. Players win a point when they catch the ball.
 
\lstset{language=Java, caption={The source code for the Game class.}, captionpos=b}
\begin{lstlisting}
public class Game {

  public static final int FIELD_X = 80;
  public static final int FIELD_Y = 24;
  public static final int BALL_LIFESPAN = 1000;

  public static void main(String[] args) {

    final Ball ball = new Ball();
    final Player playerP = new Player(0), 
    playerQ = new Player(FIELD_X - 1);

    int pointsP = 0, pointsQ = 0;
    for (int i = 0; i < BALL_LIFESPAN; ++i) {

      ball.move();

      new Thread() {
        public void run() {
          playerP.perception(ball.y);
        }
      }.start();

      new Thread() {
        public void run() {
          playerQ.perception(ball.y);
        }
      }.start();

      if (ball.x == 0 && playerP.y == ball.y) {
        ++pointsP;
      }
      if (ball.x == FIELD_X - 1 && playerQ.y == ball.y) {
        ++pointsQ;
      }

    }

    System.out.println(pointsP + " " + pointsQ);
  }
}\end{lstlisting}

The ball can move all four directions with the same probability or, to be more precise, its movement is a random walk.

\lstset{language=Java, caption={The source code for the Ball class.}, captionpos=b}
\begin{lstlisting}
class Ball {

  int x = Game.FIELD_X / 2, y = Game.FIELD_Y / 2;

  void move() {
    int dx = (int) (Math.random() * 3) - 1;
    int dy = (int) (Math.random() * 3) - 1;

    if (x + dx < Game.FIELD_X && x + dx >= 0) {
      x += dx;
    }
    if (y + dy < Game.FIELD_Y && y + dy >= 0) {
      y += dy;
    }
  }
}\end{lstlisting}

The players can only move up and down on the sides of the field. They endeavor to catch the ball when it reaches the sides of the field. Our examples, the players P and Q are aware of the reality via an interface called Sensory.

\lstset{language=Java, caption={The source code for the Sensory interface.}, captionpos=b}
\begin{lstlisting}
interface Sensory {
  void perception(int ballY);
}\end{lstlisting}

\lstset{language=Java, caption={The source code for the Player class.}, captionpos=b}
\begin{lstlisting}
class Player implements Sensory {

  int x = 0, y = Game.FIELD_Y / 2;

  public Player(int x) {
    this.x = x;
  }

  public void perception(int ballY) {  
    move(ballY);  
  }

  protected void move(int ballY) {
    if (y < ballY) {
      ++y;
    } else if (y > ballY) {
      --y;
    }
  }
}\end{lstlisting}

The execution of the perception method of the interface Sensory is blocked for 500 millisecond by the following AspectJ aspect. 

\lstset{language=Java, caption={The source code for the Delay aspect.}, captionpos=b}
\lstset{emph={aspect, call, pointcut, before}, emphstyle=\bfseries}
\begin{lstlisting}
aspect Delay {

  public pointcut perceptionCall(): 
    call(public void Player.perception(int));

  before(): perceptionCall() {

    try {
      Thread.sleep(500);
    } catch(InterruptedException e){e.printStackTrace();}    
  }
}\end{lstlisting}

It may be noted, as a curiosity, that using the 500 millisecond duration in the inserted code snippet was suggested by \citep{penrose} which presents Libet and Kornhuber's results on the timing of consciousness \citep{libet}, \citep{kornhuber}. But it is immaterial in this case where the results were observed are shown in Table \ref{aspectexample}. In addition, our \textit{living in the past} aspect implementation is sufficiently buggy, for example, it has no mutual exclusion for protecting scores and coords. Nevertheless, this simple example delivers the expected results, namely that the scores decrease as we increase the amount of delay time.

\begin{table}[h]
\caption{Execution results of the delay aspect (with the variable BALL\-\_LIFESPAN set to 100.000).}
\label{aspectexample}
\centering
\setlength{\tabcolsep}{1.1em}
\renewcommand{\arraystretch}{1.1}
\begin{tabular}[C]{| r | r | r |} \hlx{h}
\multicolumn{3}{|c|}{Naive example of the \textit{living in the past}}\\ \hlx{h}
\multicolumn{1}{|c|}{\bf Time [ms]}&
\multicolumn{1}{c|}{\bf Aver. Scores}&
\multicolumn{1}{c|}{\bf Exec. Time [min]} \\ \hlx{hv}
javac& {\bf 1088}&9.7\\ \hlx{vhv}
javac& {\bf 1156}&10.0\\ \hlx{vhv}
no aspect& {\bf 1162}&9.6\\ \hlx{vhv}
no blocking& {\bf 1036}&9.7\\ \hlx{vhv}
0.001& {\bf 249}&11.2\\ \hlx{vhv}
0.01& {\bf 246}&10.0\\ \hlx{vhv}
0.1& {\bf 239}&10.2\\ \hlx{vhv}
0.5& {\bf 227}&9.9\\ \hlx{vhv}
1& {\bf 226}&10.0\\ \hlx{vhv}
2& {\bf 183}&9.9\\ \hlx{vhv}
5& {\bf 142}&10.1\\ \hlx{vhv}
50& {\bf 68}&11.3\\ \hlx{vhv}
200& {\bf 57}&16.9\\ \hlx{vhv}
500& {\bf 53}&23.5\\ \hlx{vhv}
1000& {\bf 52.5}&33.7\\ \hlx{vh}
\end{tabular}\end{table}

The reader can easily see that our example aspect does not contain any loopholes and any analytic codes, either. But, for example, in the case in which the movement of the ball is smooth (that is well predictable) writing some successful analytics and prediction codes, of course, could be trivial.

\subsubsection{An Evolutionary Aspect of the \textit{living in the past}}

Why may this approach be interesting from the point of view of the awakening of consciousness? Because it may be possible that living matter could have developed such \textit{living in the past} aspects, in which they can run analytics and prediction methods. Doing so can start the evolutionary process simply and solely because the organisms who make wrong predictions become extinct. In this sense, applying \textit{living in the past} offers an ability for the organisms to develop a successful prediction mechanism, since a predicted, interesting event that occurs in the "delay window" shown in Figure \ref{lpast} can be effectively verifiable, because it has already happened.

\begin{figure}[h]
\begin{center}
\includegraphics[scale=0.45]{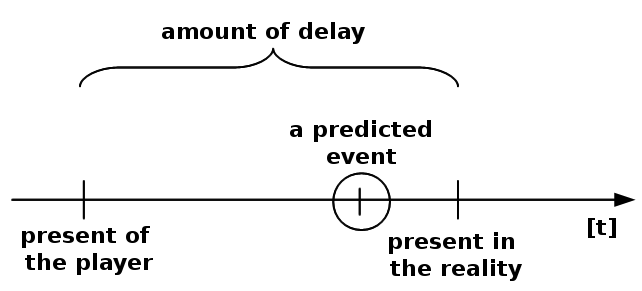}
\caption{A simple schematic drawing of the well-known conception of \textit{"living in the past"}.\label{lpast}}
\end{center}
\end{figure}

Finally, it may be mentioned that the most recent sensational and paradoxical results of precognition \citep{future} perhaps may be easily explained in the context of the \textit{living in the past}.

\subsection{The Consciousness as a Computing Paradigm}

In the world of computer programs, the barbarian methods of natural selection may be partially waived because computers have the necessary computing resources that they can subsequently execute analytical computations, that is, for developing good solutions it is not necessary to extinct whole generations or races of living species. 

In our opinion, a paradigm shift is needed to achieve the age of intelligent machines. The base of a new paradigm may be using our simple definitions of machine consciousness, that may be called consciousness oriented programming.

\section{Consciousness Oriented Programming}

Consciousness oriented programming is a  new way of approaching software development, in which two basic situations can be distinguished today. 

\begin{itemize}
\item Existing computer programs should be further developed to be conscious or self-conscious computer programs in line with our previous definitions. In general case, it is nearly impossible to modify computer programs, but the situation is not hopeless if modifications are implemented as new aspects in the sense of AOP \citep{aop}.

\item New computer programs should be developed in conformity with the spirit of our definitions.

\end{itemize}

In both cases, the development of consciousness will require using prediction methods and the development of self-consciousness will require applying inner simulation in the sense mentioned in Section \ref{sect_mc}.

\section{Use Cases for the COP}

In this section, we follow the spirit of the definitions outlined previously.

\subsection{Programming on Paper}

\begin{notation}[Predicted and Real Input]
Denote $ \langle p(redicted)_i \rangle:N( \subseteq \mathbb{N})  \rightarrow I(nput)$ the sequence of the predicted input and $ \langle r(eal)_i  \rangle:N( \subseteq \mathbb{N})  \rightarrow I(nput)$ the sequence of the real input, where $I$ denotes the set of all possible inputs.
\end{notation}

\begin{definition}[Consciousness Indicator Sequence]\label{cis}
We define the consciousness indicator sequence $\langle c_i \rangle:N( \subseteq \mathbb{N})  \rightarrow \{0, 1\}$ as follows 
\begin{equation*}
c_i =
\begin{cases}
0 & \text{if } p_i \ne r_i,\\
1 & \text{if } p_i = r_i.
\end{cases}
\end{equation*}
\end{definition}

\begin{definition}[Conscious Computer Programs]\label{ccp_pop}
In a given time interval, the behavior of a computer program is referred to as conscious if its consciousness indicator sequence is not Kolmogorov-Chaitin random \citep{vitanyi}.
\end{definition}

We should remark that this definition does not kill the consciousness, because the Kolmogorov-Chaitin randomness is algorithmically undecidable.

The next section will diverge from the proposed inner prediction mechanism to a simpler way, and meanwhile we will stay within the classical framework of Turing machines.

\subsubsection{Quasi-Intuitive Machines and Languages}

In the majority of cases in this subsection, a comma between words denotes the concatenation of these words which are suitable encoded if necessary.

\begin{definition}[Universal Quasi-Intuitive Machines]\label{qim}
Let $T$ be a Turing machine and let $p$ be a positive real number. An universal quasi-intuitive machine $Q_{x, p}$ is created by the following scheme shown in Figure \ref{qimschema}, provided that there exists a sequence of words $x_1, \dots, x_n (=x)$ having the properties that 
\begin{eqnarray}
i = 1 ,& T(x_i) = yes\\
2 \le i \le n ,& Q_{x_{i-1}, p}(T, x_i) = yes
\end{eqnarray}
In Figure \ref{qimschema}, $U$ denotes an universal Turing machine and $"d(x,y)<p"$ denotes a Turing machine that can indicate that the input words $x$ and $y$ are similar to each other. If this machine $Q_{x, p}$ stops it makes the computation of the function $Q_{x, p}(T, y)=QIM(x_1, \dots, x_{n-1}, x, y, T, p)$, $QIM:{\{0, 1\}^*}^{n+3} \rightarrow \{yes, no\}$. The architectural model for the machine $QIM$ is shown in Figure \ref{uuqimschema}.
\end{definition}

\begin{figure}[h]
\begin{center}
\includegraphics[scale=0.7]{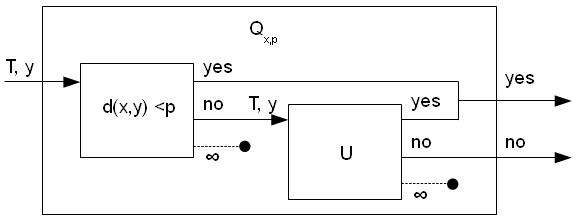}
\caption{An architectural model for the universal quasi-intuitive machine.\label{qimschema}}
\end{center}
\end{figure}

\begin{remark}[Comparisons of the Words]\label{cotw}
Using the normalized information distance (NID) \citep{simmet} as the metric distance $d(x, y)$ of the two words $x$ and $y$ is theoretically exciting, but in this case we cannot build the Turing machine $Q_{x, p}$ that contains such a "$d(x,y)<p$" machine, because this is not an existing machine due to the Kolmogorov complexity function is not partial recursive \citep[pp. 127, pp. 216]{vitanyi}. Specifically, the computability of NID is discussed in \citep{nonapprox, nonapprox2}. But then we can successfully use the normalized compression distance (NCD) \citep{simmet, compr0, compr1} instead of the theoretical normalized information distance because the compression distance is trivially partial recursive  \citep{nonapprox}. Another trivial option may be to use the Google similarity distance (NGD) \citep{google2} or the normalized web distance (NWD) \citep{nwd} as the metric $d$. In the following, we suppose that the predicate $d(x,y)<p$ is recursive.
\end{remark}

\begin{remark}\label{xalready}
In the intuitive sense, $x \in \{0, 1\}^*$ is such a word that has already been accepted by the Turing machines $T$ or $Q$.
\end{remark}

\begin{definition}[The Universal Quasi-Intuitive Language]\label{uqild}
The universal quasi-intuitive language 
\begin{multline*}
QIL = \{x_1, \dots, x_n (= x), y, T, p  \; \vert \; T \text{is defined}, T(x_1)=yes,\\
Q_{x_{i-1}, p}(T, x_i) = yes, 2 \le i \le n \text{ and } Q_{x, p}(T, y) = yes\}.
\end{multline*}
\end{definition}

\begin{theorem}\label{uqilt}
 $QIL \in \mathcal{RE}$.
\end{theorem}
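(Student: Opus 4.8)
The plan is to show $QIL \in \mathcal{RE}$ by exhibiting a Turing machine that semi-decides it, i.e. halts and accepts on every string in $QIL$ and either rejects or runs forever on strings outside. The key observation is that every condition defining $QIL$ is of the form ``some Turing machine, run on some input, accepts'' — and each such condition is itself semi-decidable. The only subtlety is that the membership tests are nested: $Q_{x_{i-1},p}$ is built from $Q_{x_{i-2},p}$, and so on down to the base case $T(x_1)=yes$. So the machine for $QIL$ must unfold this recursion explicitly.

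First I would fix the input format: the machine $M_{QIL}$ receives $\langle x_1, \dots, x_n, y, T, p\rangle$ suitably encoded, and begins by checking the purely syntactic/recursive side conditions — that $T$ is a well-formed Turing machine description and that $p$ decodes to a positive real — which are decidable. Second, I would describe a subroutine $\textsc{SimQ}(x_1,\dots,x_k,z,T,p)$ that simulates $Q_{x_k,p}(T,z)$ by following the architectural scheme of Figure \ref{qimschema}: it recursively verifies (by further calls to $\textsc{SimQ}$ on shorter prefixes, bottoming out at a direct simulation of $U$ on $\langle T, x_1\rangle$) that $x_1,\dots,x_k$ is a legitimate ``already-accepted'' chain, evaluates the recursive predicate $d(\cdot,\cdot)<p$ — which by Remark \ref{cotw} we are assuming recursive — at each stage, and simulates $U$ on $\langle T, z\rangle$; it returns $yes$ exactly when $Q_{x_k,p}(T,z)$ would halt with $yes$. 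Third, $M_{QIL}$ runs $\textsc{SimQ}$ for $i = 2, \dots, n$ to confirm the chain conditions $Q_{x_{i-1},p}(T,x_i)=yes$, and finally runs $\textsc{SimQ}(x_1,\dots,x_n,y,T,p)$; if all of these return $yes$, $M_{QIL}$ accepts.

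The correctness argument then splits in the usual two directions. If the input is in $QIL$, then by definition $T(x_1)=yes$ (so the base simulation of $U$ halts and accepts), each $Q_{x_{i-1},p}(T,x_i)$ halts with $yes$, and $Q_{x,p}(T,y)$ halts with $yes$; since each of these halting computations is faithfully reproduced by the corresponding finite tree of $\textsc{SimQ}$ calls, $M_{QIL}$ halts and accepts. Conversely, $M_{QIL}$ accepts only after every one of those simulations has returned $yes$, which by construction means all the defining conditions hold, so the input is in $QIL$. Hence $L(M_{QIL}) = QIL$ and $QIL \in \mathcal{RE}$.

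The main obstacle — and the place where care is genuinely required — is the nested recursion in the definition of $Q_{x,p}$: one must make sure the unfolding of $\textsc{SimQ}$ terminates on accepting inputs. This is fine because the recursion strictly decreases the length of the prefix $x_1,\dots,x_k$ and bottoms out at the single direct call ``simulate $U$ on $\langle T,x_1\rangle$'', so the call tree is finite whenever the underlying $Q$-computations halt; the potential non-termination is then entirely inherited from the undecidability of the halting of $U$, which is exactly what we are allowed to have in an $\mathcal{RE}$ semi-decision procedure. The second point worth stating explicitly is that we are relying on Remark \ref{cotw}: if $d(x,y)<p$ were taken to be the (non-recursive) NID-based predicate, $\textsc{SimQ}$ could not even be written, so the theorem is implicitly ``modulo'' the assumption that $d$ is recursive (as with NCD, NGD or NWD).
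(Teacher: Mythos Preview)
Your proposal is correct and follows essentially the same approach as the paper: exhibit a Turing machine that semi-decides $QIL$ by sequentially verifying the defining chain of acceptance conditions. The paper's own proof is a one-liner that simply points to the architectural diagram of the machine $QIM$ in Figure~\ref{uuqimschema} and asserts that it accepts $QIL$; you have, in effect, written out in prose what that diagram encodes, together with the correctness argument the paper omits.

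The only place where you do a bit more work than strictly necessary is the recursive $\textsc{SimQ}$ subroutine. In the paper's setup, once $x_{i-1}$ and $p$ are fixed, $Q_{x_{i-1},p}$ is a single concrete machine built from $U$ and the recursive predicate ``$d(\cdot,\cdot)<p$'' (Figure~\ref{qimschema}); the ``provided that there exists a sequence'' clause is a well-definedness precondition, not part of the machine's internal computation. So the semi-decider can simply simulate $T$ on $x_1$, then each $Q_{x_{i-1},p}$ on $(T,x_i)$, then $Q_{x_n,p}$ on $(T,y)$, one after another, without an explicitly nested recursion. Your unfolding is not wrong --- it still terminates on accepting inputs for exactly the reason you give --- just slightly heavier than what the paper's figure suggests. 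Your explicit remark that the argument relies on the recursiveness of $d(x,y)<p$ from Remark~\ref{cotw} is a point the paper uses silently and is worth stating.
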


\begin{proof}\label{uqilp}
To verify assertion  $QIL \in \mathcal{RE}$, it is sufficient to observe that the language accepted by the machine $QIM$ shown in Figure \ref{uuqimschema} is equal to $QIL$.
\begin{figure}[h]
\begin{center}
\includegraphics[scale=0.7]{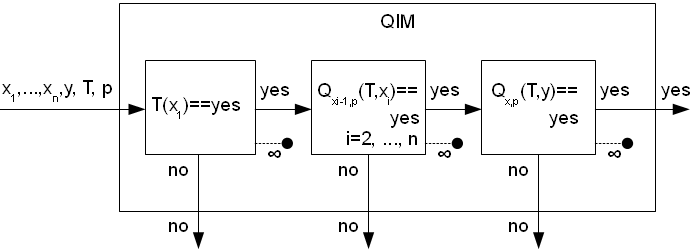}
\caption{An architectural model for a machine $QIM$ that accepts the language $QIM$.\label{uuqimschema}}
\end{center}
\end{figure}
(We believe that we can prove a bit stronger theorem $QIL \in \mathcal{RE} \setminus \mathcal{R}$.)
\end{proof}

\begin{definition}[Similar Languages]\label{sl}
Let $E \subseteq  \{0, 1\}^*$ be a given language and let $p \in \mathbb{R}$ be a positive real number. The language 
$SL_E= \{y \, \vert \, y \in \{0, 1\}^*, d(x, y)<p, x \in E \}$
is said to be similar to $E$.
\end{definition}

\begin{theorem}\label{simlang}
 $E \in \mathcal{RE} \Rightarrow SL_E \in  \mathcal{RE} \setminus \mathcal{R}$.
\end{theorem}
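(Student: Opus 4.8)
The plan is to split the claim into its two halves, $SL_E \in \mathcal{RE}$ and $SL_E \notin \mathcal{R}$, and prove them separately; the first is essentially bookkeeping, the second is where the content (and a genuine subtlety) sits. For $SL_E \in \mathcal{RE}$ I would argue as follows. Since $E \in \mathcal{RE}$, fix a Turing machine $M$ with $L(M)=E$, and recall from Remark~\ref{cotw} that the predicate $d(x,y)<p$ is recursive, so each ``ball'' $B_p(x)=\{y : d(x,y)<p\}$ is decidable uniformly in $x$. Build a machine $M'$ that on input $y$ dovetails over all pairs $(x,t)\in\{0,1\}^*\times\mathbb{N}$, simulating $M$ on $x$ for $t$ steps, and whenever it finds that $M$ has accepted some $x$ it tests whether $d(x,y)<p$ and accepts $y$ if so. Then $M'$ accepts $y$ iff some $x\in E$ satisfies $d(x,y)<p$, i.e. iff $y\in SL_E$; equivalently $SL_E=\bigcup_{x\in E}B_p(x)$ is a recursively enumerable union of uniformly recursive sets. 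Either way $SL_E\in\mathcal{RE}$.

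For $SL_E\notin\mathcal{R}$ the intended route is a many-one reduction from a non-recursive r.e. set to $SL_E$; the obvious handle is that every reasonable distance has $d(x,x)=0<p$, hence $E\subseteq SL_E$, and one would like to upgrade a decision procedure for $SL_E$ into one for $E$. Here I would first flag that the statement as literally quantified over \emph{all} $E\in\mathcal{RE}$ cannot hold: for $E=\emptyset$ we have $SL_\emptyset=\emptyset\in\mathcal{R}$, and for a degenerate $d$ (say a $[0,1]$-normalized distance with $p>1$, so $d(x,y)<p$ always) we get $SL_E=\{0,1\}^*\in\mathcal{R}$ for every nonempty $E$. So the theorem should be read with the natural side conditions that $E$ is itself non-recursive and $d$ is non-degenerate in the sense that $SL_E$ remains co-infinite --- a condition satisfied in the typical case by NCD, NGD and NWD, since suitably incompressible strings lie $d$-far from every fixed r.e. set of shorter strings. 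Under these conditions the plan is to enumerate $E$ and design a total computable $f$ sending a word $w$ to a ``witness-carrying'' string $f(w)$ --- for instance $w$ with an incompressible ``far'' block pasted on --- engineered so that the only candidate for a near neighbour of $f(w)$ inside $SL_E$ is a canonical relocation of $w$, and so that this relocation lies in $E$ exactly when $w$ does; then $E\le_{\mathrm{m}} SL_E$ and non-recursiveness of $SL_E$ follows from that of $E$.

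The main obstacle is exactly this reduction. Membership $y\in SL_E$ only certifies the \emph{existence} of an $E$-element within distance $p$ of $y$, not that $y$ belongs to $E$, so to couple ``$w\in E$?'' to ``$f(w)\in SL_E$?'' one must fabricate, computably and uniformly in $w$, a string whose whole $p$-neighbourhood is under control. That in turn forces commitments to concrete properties of $d$ --- a triangle-inequality-type estimate together with the existence and effective constructibility of strings $d$-far from all strings of smaller length --- which the abstract hypothesis ``$d(x,y)<p$ is recursive'' does not by itself provide. I expect the honest outcome to be: $SL_E\in\mathcal{RE}$ unconditionally, and $SL_E\notin\mathcal{R}$ once $E$ is non-recursive and $d$ is non-degenerate, with the bulk of the effort spent pinning down the minimal metric assumptions that make the reduction work.
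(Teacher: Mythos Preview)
Your argument for $SL_E\in\mathcal{RE}$ is essentially the paper's: the paper builds, for each of the two cases $E\in\mathcal{R}$ and $E\in\mathcal{RE}\setminus\mathcal{R}$, an acceptor that enumerates candidates $x$ (in the second case by the standard diagonal dovetailing) and tests $d(x,y)<p$. Your single dovetailing machine over pairs $(x,t)$ does both cases at once, but the idea is the same.

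Where you diverge is on $SL_E\notin\mathcal{R}$. The paper's justification is simply that the acceptor it just built may fail to halt when $y\notin SL_E$, because the ``for all $x$'' search need never terminate. That of course only shows that \emph{this particular} machine is not a decider, not that no decider exists; so the paper's argument for non-recursiveness is, strictly speaking, not a proof. You instead look for a many-one reduction from a non-recursive r.e.\ set, and along the way you notice the genuine counterexamples ($E=\emptyset$, or a degenerate $d$ with $p$ exceeding its diameter) that falsify the theorem as literally stated. Your added side conditions ($E$ non-recursive, $d$ non-degenerate so that $SL_E$ is co-infinite) and your plan to engineer $f(w)$ with a controlled $p$-neighbourhood are the right shape for an honest proof; the paper neither states such hypotheses nor supplies a reduction. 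In short: on the $\mathcal{RE}$ half you match the paper; on the $\notin\mathcal{R}$ half you are attempting something stronger and more correct than what the paper actually does, and your caveats about needing concrete metric properties of $d$ are well placed.
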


\begin{proof}\label{simlangp}
In the case $E \in \mathcal{R}$, we can construct a new Turing machine $SLM$ shown in Figure \ref{slm} which accepts $SL_E$.
\begin{figure}[h]
\begin{center}
\includegraphics[scale=0.7]{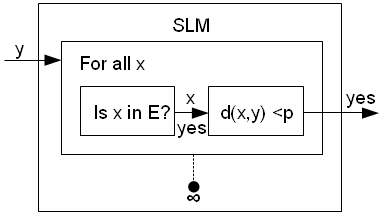}
\caption{The Turing machine $SLM$ for the case $E \in \mathcal{R}$.\label{slm}}
\end{center}
\end{figure}

To see why the language $SL_E$ is not recursive, consider the case of $y \notin SL_E$, it is possible that the new machine $SLM$ will never halt, because it is possible that the part labelled "For all x/d(x, y)$<$p" will continue searching for suitable $x$ for ever. It may be noted that the the canonical ordering of $\{0, 1\}^*$, for example shown in \citep{ronyai}, can be applied to help to enumerate the words of the language $E$ by the part labelled  "For all x".

In the case $E \in \mathcal{RE} \setminus \mathcal{R}$, we can construct a new Turing machine $SLM^{'}$ shown in Figure \ref{slmv} which accepts $SL_E$.
\begin{figure}[h]
\begin{center}
\includegraphics[scale=0.7]{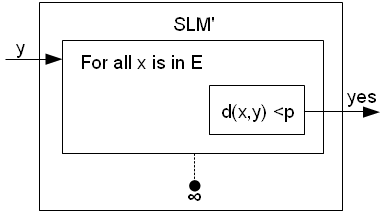}
\caption{The Turing machine $SLM^{'}$ for the case $E \in \mathcal{RE} \setminus \mathcal{R}$.\label{slmv}}
\end{center}
\end{figure}
It may be also noted here that a procedure based on Cantor's first diagonal argument, for example shown in \citep{ronyai}, can be applied to enumerate the words of the language $E$ by the part labelled  "For all x is in E".
\end{proof}

\begin{definition}[Quasi-Intuitive Languages]\label{qil}
With the notation already introduced in Definition \ref{qim}, a quasi-intuitive language $QILT$ is defined as $QILT \; (=\cup_{i=1}^{\infty} QILT_{i})=\cup_{i=1}^{\infty} I_{p, i}^T$, where $ I_{p, i}^T$ is constructed by the following scheme:
\begin{eqnarray}
I_{p, 1}^T & = & \{x \, \vert \, T(x) = yes\} \\
I_{p, i+1}^T & = & \{y \, \vert \, x \in I_{p, i}^T, Q_{x,p}(T, y) = yes \},  i \ge 1 
\end{eqnarray}
where $y \in \{0, 1\}^*$ is an arbitrary word or, for example, $y \in L(G)$ generated by some generative grammar $G$.
\end{definition}

\begin{theorem}\label{qilt}
Let $n \in \mathbb{N}$ be a given natural number and let $T$ be a Turing machine. 
Then $L_T \in \mathcal{RE} \Rightarrow QILT_n \in \mathcal{RE} \setminus \mathcal{R}$. 
\end{theorem}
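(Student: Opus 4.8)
The plan is to deduce this from Theorem~\ref{simlang} by an induction on $n$, after observing that every layer $QILT_{i+1}=I_{p,i+1}^T$ of the quasi-intuitive language is exactly the language similar, in the sense of Definition~\ref{sl}, to the previous layer $QILT_i=I_{p,i}^T$ (with the same threshold $p$ and the same predicate $d(x,y)<p$, which is recursive by Remark~\ref{cotw}). Thus the statement unfolds into an $(n-1)$-fold iteration of the passage $E\mapsto SL_E$, and Theorem~\ref{simlang} tells us that each such passage preserves membership in $\mathcal{RE}$ while destroying membership in $\mathcal{R}$.

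First I would unwind the definitions. By Definition~\ref{qil}, $QILT_1=I_{p,1}^T=L_T$, which lies in $\mathcal{RE}$ by hypothesis (indeed for any Turing machine), and for $i\ge 1$
\[
QILT_{i+1}=I_{p,i+1}^T=\{\,y \mid x\in I_{p,i}^T,\ Q_{x,p}(T,y)=yes\,\}.
\]
Reading off from Figure~\ref{qimschema} that $Q_{x,p}(T,y)$ halts with $yes$ precisely when its ``$d(x,y)<p$'' component reports that $x$ and $y$ are similar, this set is exactly $SL_{I_{p,i}^T}=\{\,y \mid y\in\{0,1\}^*,\ d(x,y)<p,\ x\in I_{p,i}^T\,\}$, so $QILT_{i+1}=SL_{QILT_i}$ for every $i\ge 1$. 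Now I would run the induction: the base case $QILT_1=L_T\in\mathcal{RE}$ is the hypothesis; assuming $QILT_i\in\mathcal{RE}$, Theorem~\ref{simlang} applied to $E=QILT_i$ gives $QILT_{i+1}=SL_{QILT_i}\in\mathcal{RE}\setminus\mathcal{R}$, and in particular $QILT_{i+1}\in\mathcal{RE}$, which licenses the next step. Iterating from $i=1$ to $i=n-1$ yields $QILT_n=SL_{QILT_{n-1}}\in\mathcal{RE}\setminus\mathcal{R}$ for every $n\ge 2$ (the anchoring case $n=1$ gives only $QILT_1\in\mathcal{RE}$, so the non-recursiveness part is really the content for $n\ge 2$).

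Equivalently, the $\mathcal{RE}$ half can be exhibited directly by stacking the machine $SLM'$ of Figure~\ref{slmv} $n-1$ times on top of $T$, dovetailing the inner searches so that a non-halting intermediate computation cannot block the outer ones; and the non-recursiveness is then inherited from the outermost copy exactly as in the proof of Theorem~\ref{simlang}, since for $y\notin QILT_n$ the outermost ``for all $x$ with $d(x,y)<p$'' loop may go on searching for a suitable $x$ forever. The main obstacle I anticipate is the bookkeeping in the inductive step, namely being sure, from the architecture in Figure~\ref{qimschema}, that $Q_{x,p}(T,y)$ accepts if and only if $d(x,y)<p$, rather than also accepting every $y$ with $T(y)=yes$ or every $y$ accepted by some earlier $Q$ in the chain. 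If the quasi-intuitive machine does accept such extra words, the clean equality weakens to $QILT_{i+1}=SL_{QILT_i}\cup F$ with $F\in\mathcal{RE}$, and one then needs the supplementary observation that adjoining an $\mathcal{RE}$ set to $SL_{QILT_i}$ cannot restore recursiveness — which one gets by rerunning the non-halting argument of Theorem~\ref{simlang} on the enlarged machine, whose ``for all $x$'' loop is unaffected by the extra acceptance clause. Beyond this, the proof is just a routine chaining of the already-established Theorem~\ref{simlang}.
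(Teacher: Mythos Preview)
Your approach is essentially the paper's: the paper's entire proof is the observation
\[
I_{p,1}^T = L_T,\qquad I_{p,i+1}^T = L_T \cup SL_{I_{p,i}^T},
\]
leaving the induction via Theorem~\ref{simlang} implicit, which is exactly what you spell out. The only discrepancy is that the paper's recursion carries the extra summand $L_T$ (coming from the $U$-branch of $Q_{x,p}$ in Figure~\ref{qimschema}), so your clean identity $QILT_{i+1}=SL_{QILT_i}$ is not what holds; but you already anticipated this as the ``$\cup F$'' case and handled it in the same spirit as the paper, so nothing further is needed.
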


\begin{proof}\label{qilp}
Let us observe that
\begin{eqnarray}
I_{p, 1}^T & = & L_T \\
I_{p, i+1}^T & = & L_T \cup SL_{I_{p, i}^T}
\end{eqnarray}
\end{proof}

\begin{definition}[Similar Words]\label{sw}
Let $L \subseteq  \{0, 1\}^*$ be a given language and let $p \in \mathbb{R}$ be a positive real number. We define the language of similar words as follows
$SW_L= \{x, y \, \vert \, x \in L, y \in \{0, 1\}^*, d(x, y)<p \}$.
\end{definition}

\begin{theorem}\label{rlrs}
 $L \in \mathcal{R} \Rightarrow SW_L \in \mathcal{R}$.
\end{theorem}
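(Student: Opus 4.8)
The plan is to exhibit a total Turing machine that decides $SW_L$ by composing two deciders that, by hypothesis, already exist. Recall that an element of $SW_L$ is a pair of words $x, y$ (encoded as a single word over $\{0,1\}$, as stipulated at the start of this subsection) subject to the two conditions $x \in L$ and $d(x,y) < p$. The feature that distinguishes the present situation from Theorems \ref{simlang} and \ref{qilt} is that here both $x$ and $y$ are supplied as part of the input, so there is no existential search over candidate words to perform: we merely have to check a conjunction of two predicates, each of which is decidable, on data that is already in hand.

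Concretely, I would proceed in three steps. First, decode: given the encoded input, a Turing machine recovers the constituent words $x$ and $y$ by the standard unpairing, which is plainly recursive; if the input is not a well-formed encoding of a pair, reject. Second, invoke the hypothesis $L \in \mathcal{R}$ to obtain a Turing machine $M_L$ that halts on every input and accepts exactly $L$, and run $M_L$ on $x$; if $M_L$ rejects, reject. Third, invoke Remark \ref{cotw}, where we agreed to assume that the predicate $d(x,y) < p$ is recursive (using, e.g., NCD, NGD or NWD in place of the non-computable NID); letting $M_d$ be a total decider for this predicate, run $M_d$ on $(x,y)$ and accept if and only if $M_d$ accepts. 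Since the decoding step, $M_L$, and $M_d$ all halt on every input, the composite machine halts on every input and accepts precisely $SW_L$, so $SW_L \in \mathcal{R}$.

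There is no genuinely hard analytic step here: the content of the statement is essentially the fact that the recursive sets are closed under intersection and under computable preimages, together with the observation that the definition of $SW_L$ involves no unbounded search (unlike $SL_E$, which quantifies existentially over $x \in E$). The one point worth stating carefully — and the only place the argument could break — is the reliance on Remark \ref{cotw}: the conclusion is conditional on the metric $d$ being chosen so that "$d(x,y)<p$" is recursive; with the theoretical normalized information distance this hypothesis would fail, which is exactly why that remark was inserted.
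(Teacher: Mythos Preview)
Your proposal is correct and follows essentially the same approach as the paper: the paper's proof simply exhibits a Turing machine $SW\mspace{-4mu}M$ (given by a figure) that, on input $x,y$, checks $x \in L$ using a decider for $L$ and then checks $d(x,y)<p$, halting in all cases. Your write-up is more explicit about the decoding step and the reliance on Remark~\ref{cotw}, but the underlying construction is the same.
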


\begin{proof}\label{rlrsp}
We can construct a new Turing machine $SW\mspace{-4mu}M$ shown in Figure \ref{sm} which accepts $SW_L$ and always halts.
\begin{figure}[h]
\begin{center}
\includegraphics[scale=0.7]{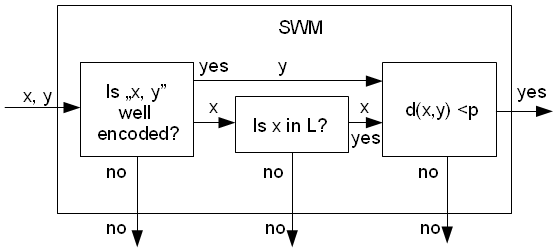}
\caption{The Turing machine $SW\mspace{-4mu}M$.\label{sm}}
\end{center}
\end{figure}
\end{proof}

\begin{definition}[Quasi-Intuitive or Self-Similar Words]\label{ssw}
Let $T$ be a given Turing machine and let $s \in \{0, 1\}^*$ be a word such that $T(s)=yes$. Finally, let $\langle r_i  \rangle:N( \subseteq \mathbb{N})  \rightarrow \{0, 1\}^*$ be a finite or infinite sequence of arbitrary words. The union of the elements of the sequence of sets $I_i$ is said to be self-similar, if the sequence is created by the following method:
\begin{eqnarray*}
I_1 & = & \{s\} \\
i \ge 1, I_{i+1} & = &
\begin{cases}
I_i \cup \{r_i\}, & \text{if } \exists x \in I_i: Q_{x, p}(T, r_i) = yes\\
I_i, & \text{otherwise}.
\end{cases}
\end{eqnarray*}
\end{definition}

\begin{remark}[Consciousness Indicator Sequence]\label{sswcis}
In the case of the self-similar words, the consciousness indicator sequence may be interpreted as
$c_{n} = \bigvee_{x \in I_{n+1}} Q_{x, p}(T, r_n)$.
\end{remark}

\begin{example}\label{sswe}
Let $T$ be a Turing Machine, which accepts the language $\{ a^nb^nc^n \, $ $\vert \, n \ge 1\}$, $d=1/4$ and the word $s=aaaaaabbbbbbcccccc =  a^6b^6c^6$.
\begin{eqnarray*}
&& I_1 = \{s\} \\
r_1 & = & aaaaaabbbbbbaaaaaa=  a^6b^6a^6, I_2 =  \{s\}, c_1 = no \\
r_2 & = & aaaaaabbbbbbccc =  a^6b^6c^3, I_3 =  \{s, r2\}, c_2 = yes \\
r_3 & = & aaaaabbbbbccccc = a^5b^5c^5, I_4 =  \{s, r2, r3\}, c_3 = yes \\
r_4 & = &  a^7b^7c^7, I_5 =  \{s, r2, r3, r4\}, c_4 = yes \\
r_5 & = &  c^1a^5b^5c^4, I_6 =  \{s, r2, r3, r4, r5\}, c_5 = yes \\
r_6 & = &  c^3a^6b^5c^4b^2 , I_7 =  \{s, r2, r3, r4, r5\}, c_6 =  \mathbf{no} \\
r_7 & = &  c^3a^6b^5c^3 , I_8 =  \{s, r2, r3, r4, r5, r7\}, c_7 = yes \\
r_8 & = & r_6, I_9 =  \{s, r2, r3, r4, r5, r7, r8\}, c_8 = \mathbf{yes}
\end{eqnarray*}
where the CompLearn package \citep{compl} is used to compute NCD:
\begin{align*}
NCD(s, r_1)   &=   0.3125  \\
NCD(s, r_2)   &=   0.176471 \\
NCD(s, r_3)   &=   0.25,     NCD( r_2, r_3) = 0.294118   \\
NCD( s, r_4)  &=   0.4375 ,    NCD( r_2, r_4) =  0.470588 ,   \\
                            &NCD( r_3, r_4) =  0.5625  \\
NCD(s, r_5)  &=    0.25,   NCD( r_2, r_5) = 0.235294  , \\
                            &NCD( r_3, r_5) = 0.1875 ,    NCD( r_4, r_5) = 0.25 \\
NCD( s, r_6)  &=    0.35  ,    NCD( r_2, r_6) = 0.3  ,    NCD( r_3, r_6) = 0.3  , \\
                              &     NCD( r_4, r_6) = 0.3  ,    NCD( r_5, r_6) = 0.25  \\
 NCD( s, r_7)  &=    0.263158  ,    NCD( r_2, r_7) = 0.210526  ,    \\ 
                              & NCD( r_3, r_7) = 0.210526 ,   NCD( r_4, r_7) = 0.210526    \\
                             &   NCD( r_5, r_7) = 0.157895   \\
 NCD( r_7, r_8)  &=   0.15    
\end{align*}
We should remark that the symmetry of NCD may be violated among  short words \citep{compr1}. For example, 
$NCD(a^7b^7c^7,  a^6b^6c^3)$ $=$ $0,235294$ $\ne $  $0,470588$ $=$ $NCD(  a^6b^6c^3, a^7b^7c^7)$.
\end{example}

\subsection{Programming on Computer}

In the world of real programming, we do have plan to develop such APIs which can be used to successfully implement our definitions of machine consciousness. For some given types of applications, we are going to investigate the development of a suitable, open source Java and AspectJ APIs to enable \textit{seeing the future}. 

\subsubsection{ConsciousJ}

Designing a new programming language is another exciting possibility. At the conceptual level, the following ConsciousJ code snippet illustrates the usage of two new keywords "conscious" and "predicted", though certainly the language ConsciousJ does not exist yet. In our case, this new language to be developed can be imagined as an extension of Java or AspectJ.

\lstset{language=Java, caption={A "conscious" code snippet written in a fictitious language called ConsciousJ.}, captionpos=b}
\lstset{emph={predicted, conscious}, emphstyle=\bfseries\color{red}\underbar}
\begin{lstlisting}
conscious class Player {
  int x = 0, y = Game.FIELD_Y / 2;

  protected void move(predicted int ballY) {
    if (y < ballY) {
      ++y;
    } else if (y > ballY) {
      --y;
    }
  }

}
\end{lstlisting}

In practice, this code snippet shows that the uncertainty is appeared at the level of the programming language. A ConsciousJ class consists of attributes and methods, plus it may contain predicted attributes. This conception is shown in the following  fictitious code snippet in Listing \ref{cjpa}.

\lstset{language=Java, caption={A "conscious" class written in ConsciousJ.}, captionpos=b, label=cjpa}
\lstset{emph={predicted, conscious, input}, emphstyle=\bfseries\color{red}\underbar}
\begin{lstlisting}
conscious class Player implements Runnable {

  int x = 0, y = GameS.FIELD_Y / 2;
  int points = 0;
  Ball ball;

  int time = 0;
  int next = 6;
  int ballYHelper = 0;

  org.consciousj.pred.KalmanFilter kp;
  org.consciousj.pred.ARMA ap;
  org.consciousj.pred.SimplePast sp;
    
  predicted org.consciousj.primitivetype.Int ballY;

  public pointcut moveCall(): 
    call(protected void move(predicted int ballY));  
  
  input(): moveCall() {

    this.ballY.receive(ballY);
    
    if (++time < 1000) {
        this.ballY.method(sp);
    } else if (time < 5000) {
        this.ballY.method(ap);
    } else {
        this.ballY.method(kp);        
    }
          
    if (time % next == 0) {
        ballYHelper = this.ballY.next();
    }
    ballY = ballYHelper;
  } 
  
  public Player(int x, Ball ball) {
    this.x = x;
    this.ball = ball;

    new Thread(this).start();

  }

  public void run() {

    for (;;) {

      move(ball.y);

      try {
        Thread.sleep(20);
      } catch (InterruptedException e) {
        e.printStackTrace();
      }

      if (ball.y < 0) {
        break;
      }
    }
  }
  
  protected void move(predicted int ballY) {
    
    if (y < ballY) {
      ++y;
    } else {
      --y;
    }
  }
  
}
\end{lstlisting}

\section{Conclusion}

The idealized objective of COP is to integrate the agent-based approach into daily software development practices. What is more, the agents should be able to see the future, or, using words of Alan Turing's \citep{turing} essay, everyday softwares should be able to \textit{see a short distances ahead}. Naturally these words by Turing were really meant for humans.

The present paper presented an overall conceptual framework so that it could contribute to the attainment of this objective.

\section{Acknowledgements}

The work is partially supported by T\'AMOP 4.2.1./B-09/1/KONV-2010-0007\-/IK/\-IT \-project. The project is partially implemented through the New Hungary Development Plan co-financed by the European Social Fund, and the European Regional Development Fund.

\bibliography{cm}

\end{document}